\newtheorem{theorem}{Theorem}
\newtheorem{Lemma}{Lemma}
\crefname{section}{Sec.}{Secs.}
\Crefname{section}{Section}{Sections}
\Crefname{table}{Table}{Tables}
\crefname{table}{Tab.}{Tabs.}
\begin{document}

\title{One for all: A novel Dual-space Co-training baseline for Large-scale Multi-View Clustering}

\author{Zisen Kong, Zhiqiang Fu, Dongxia Chang, Yiming Wang, Yao Zhao, \IEEEmembership{Fellow, IEEE}

\thanks{This work was supported in part by the National Natural Science Foundation of China under Grant 62272035, in part by the National Key Research and Development of China (No. 2018AAA0102100), and part by the National Natural Science Foundation of China (No. U1936212).

Z. Kong, Z. Fu, D. Chang, and Y. Zhao are with the Institute of Information Science, Beijing Jiaotong University, Beijing 100044, and also with the Beijing Key Laboratory of Advanced Information Science and Network Technology, Beijing 100044, China (email: zskong@bjtu.edu.cn; 
fuzhiqiang1230@outlook.com;
dxchang@bjtu.edu.cn; yzhao@bjtu.edu.cn).

Y. Wang is with the School of Computer Science, Nanjing University of Posts and Telecommunications, Nanjing, China (email: ymwang@njupt.edu.cn)}}

\markboth{Journal of \LaTeX\ Class Files,~Vol.~14, No.~8, August~2021}%
{Shell \MakeLowercase{\textit{et al.}}: A Sample Article Using IEEEtran.cls for IEEE Journals}


\maketitle

\begin{abstract}
In this paper, we propose a novel multi-view clustering model, named Dual-space Co-training Large-scale Multi-view Clustering (DSCMC). The main objective of our approach is to enhance the clustering performance by leveraging co-training in two distinct spaces. In the original space, we learn a projection matrix to obtain latent consistent anchor graphs from different views. This process involves capturing the inherent relationships and structures between data points within each view. Concurrently, we employ a feature transformation matrix to map samples from various views to a shared latent space. This transformation facilitates the alignment of information from multiple views, enabling a comprehensive understanding of the underlying data distribution. We jointly optimize the construction of the latent consistent anchor graph and the feature transformation to generate a discriminative anchor graph. This anchor graph effectively captures the essential characteristics of the multi-view data and serves as a reliable basis for subsequent clustering analysis. Moreover, the element-wise method is proposed to avoid the impact of diverse information between different views. Our algorithm has an approximate linear computational complexity, which guarantees its successful application on large-scale datasets. Through experimental validation, we demonstrate that our method significantly reduces computational complexity while yielding superior clustering performance compared to existing approaches. 
\end{abstract}

\begin{IEEEkeywords}
Article submission, IEEE, IEEEtran, journal, \LaTeX, paper, template, typesetting.
\end{IEEEkeywords}

\section{Introduction}
\IEEEPARstart{W}{ith} the advancement of information technology, data now originates from various sources and can be represented by diverse attributes, enabling the acquisition of data from multiple perspectives. However, directly integrating information between these views presents challenges due to their inherent heterogeneity \cite{ShiNWL23}. As a result, extracting valuable information from these diverse views becomes a significant concern. To address this, Multi-View Clustering (MVC) has emerged as a widely employed unsupervised data mining technique \cite{LuYL16, TrostenLJK21, Wen0X0HF023, YanZLT0LL23}.

In general, the existing multi-view clustering algorithms can be classified into four major categories, namely multi-view subspace clustering \cite{CaoZFLZ15, ShiNWL22, ZhangHFZC17}, graph-based clustering methods \cite{GMC, TanLHF023, ZhanNCNZY19}, matrix factorization methods \cite{GaoHLW13, zhao2017multi, ChenLW0L22}, and anchor graph-based clustering methods \cite{LiuW00LZG22, Qiang00021, WangLLJTZZ22}.

Specifically, the subspace self-representation theory illustrates that samples within the same subspace can be linearly represented by other samples. Leveraging this idea, the multi-view subspace clustering approach utilizes the dataset itself as a dictionary to learn similarity graphs that reflect the underlying subspace structure of the samples. For instance, in the work of \cite{WangLWZZH15}, a more effective similarity matrix is obtained by maximizing the correlation among samples in the same subspace across different views while minimizing the correlation of samples from different subspace views. On the other hand, graph-based multi-view clustering explores relationships between different samples independently to construct individual graphs for each view. Subsequently, it aims to find a fused graph incorporating information from various views. Inspired by this concept, \cite{GMC} proposed a method that generates a unified graph structure through graph embedding, enabling the integration of information from multiple views.

While the previously mentioned multi-view clustering methods have found wide-ranging applications, they encounter challenges when dealing with high-dimensional data and large sample sizes. Subspace and graph-based clustering approaches suffer from high computational and space complexity, limiting their effective application on large-scale datasets. To address this, researchers have proposed matrix factorization-based methods as an alternative solution. The essence of matrix factorization-based multi-view clustering is to decompose the original matrix into smaller dimensional basis matrices and coefficient matrices. One such approach is the nonnegative and orthogonal factorization method (e.g., \cite{YangZNWYW21}), which transforms the matrix factorization optimization problem into smaller-scale subproblems, making it more suitable for handling large-scale datasets.

In recent years, addressing the challenges posed by large-scale data in multi-view clustering has gained interest. One emerging approach is based on the concept of anchors. These methods select essential anchor points from the sample data and construct graphs based on the similarity between the sample points and these anchors, reducing computational and space complexity for large-scale datasets. One such method is scalable multi-view subspace clustering with unified anchors (SMVSC), which integrates anchor learning with graph construction for consistent graphs \cite{SMVSC}. However, SMVSC might face challenges in effectively learning a consistent graph, as it may not fully consider the complementary information between different views.

In this paper, we propose the DSCMC (Dual-Space Co-training Large-scale Multi-View Clustering) algorithm to overcome existing limitations in large-scale data clustering. It utilizes a projection matrix for a discriminative anchor graph in the original feature space and a transformation matrix for a low-dimensional latent space to ensure consistent samples with the learned anchor graph. By combining these components, our algorithm effectively captures consistency and complementarity from different views. Unlike other anchor-based methods \cite{FPMVS, OMSC}, we use an element-wise approach, setting all views to the same weight, which enhances robustness in ambiguous semantic information across views \cite{HuLY22, chen2023fast}. Extensive experiments on nine benchmark datasets demonstrate the superiority of our method over state-of-the-art approaches. In summary, our contributions are as follows:
\begin{itemize}
\item DSCMC effectively captures both complementary and consistent information from different views by learning features in both the original and latent spaces, resulting in a more discriminative anchor graph.
\item The element-wise method (instead of the common view-wise method) is used to avoid the impact of diverse information between different views on the clustering performance.
\item The proposed optimization strategy can be guaranteed to be performed within linear operational complexity and can be successfully applied to large-scale data clustering.
\item Experiments demonstrate the effectiveness of our DSCMC, showing that our proposed model outperforms state-of-the-art algorithms.
\end{itemize}

\section{Related Work}
In this section, we first introduce the notations used in the paper. Then two types of methods that are most relevant to our model are presented separately, namely multi-view matrix factorization-based clustering and anchor graph-based clustering method.
\subsection{Notations}
In this paper, $n$, $V$, and $k$ are used to represent the number of samples, views, and clusters, respectively. The multi-view data for different views is denoted as $X^v \in \mathbb{R}^{d_v \times n}$, where $d_v$ is the dimension of the $v$-th view, and the sum of dimensions across all views is denoted as $c$. The Frobenius norm and the $\ell_{2,1}$ norm of the matrix $Z$ are represented as $||Z||_F$ and $||Z||_{2,1}$, respectively. The transpose of matrix $X$ is denoted by $X^T$, and $I_m$ represents the identity matrix of dimension $m$.
\subsection{Matrix factorization-based MVC}
For multi-view dataset $X^v$, the purpose of matrix factorization (MF) is to decompose the original data into low-dimensional basis and coefficient matrices, which can be described as
\begin{equation}
\begin{aligned}
\min_{U^v\ge 0,U\ge0,H^v\ge 0}\sum_{v=1}^V\|X^v-H^vU^v\|_F^2+\lambda\|U-U^v\|_F^2
\label{MF}
\end{aligned}
\end{equation}
where $H^v$ is the basis matrix for $v$-th view, $U$ is the consistent coefficient matrix and $U^v$ is the coefficient matrix of the different views. This method based on matrix factorization is intuitive and effective. In addition, it also has certain advantages in computational time. However, its non-negative constraints on the base matrix and coefficient matrix are too strong, which may affect the determination of the coefficient matrix. Based on this, \cite{OPMC} proposes a one-step decomposition strategy that can directly obtain the clustering labels while removing the non-negativity constraint. The model can be written as
\begin{equation}
\begin{aligned}
&\min_{Y,C^v,U^v}\frac{1}{V}\sum_{v=1}^V\|X^v-YC^vU^v\|_F^2\\
&s.t. U^vU{^v}^T=I,Y_{ij}\in\{0,1\},\sum_{i=1}^kY_{ij}=1
\label{MF_OPMC}
\end{aligned}
\end{equation}
where $C^v$ is a centroid matrix, and $Y$ is the clustering indicator matrix.
\begin{figure*}[!ht]
  \centering
   \includegraphics[width=0.8\linewidth]{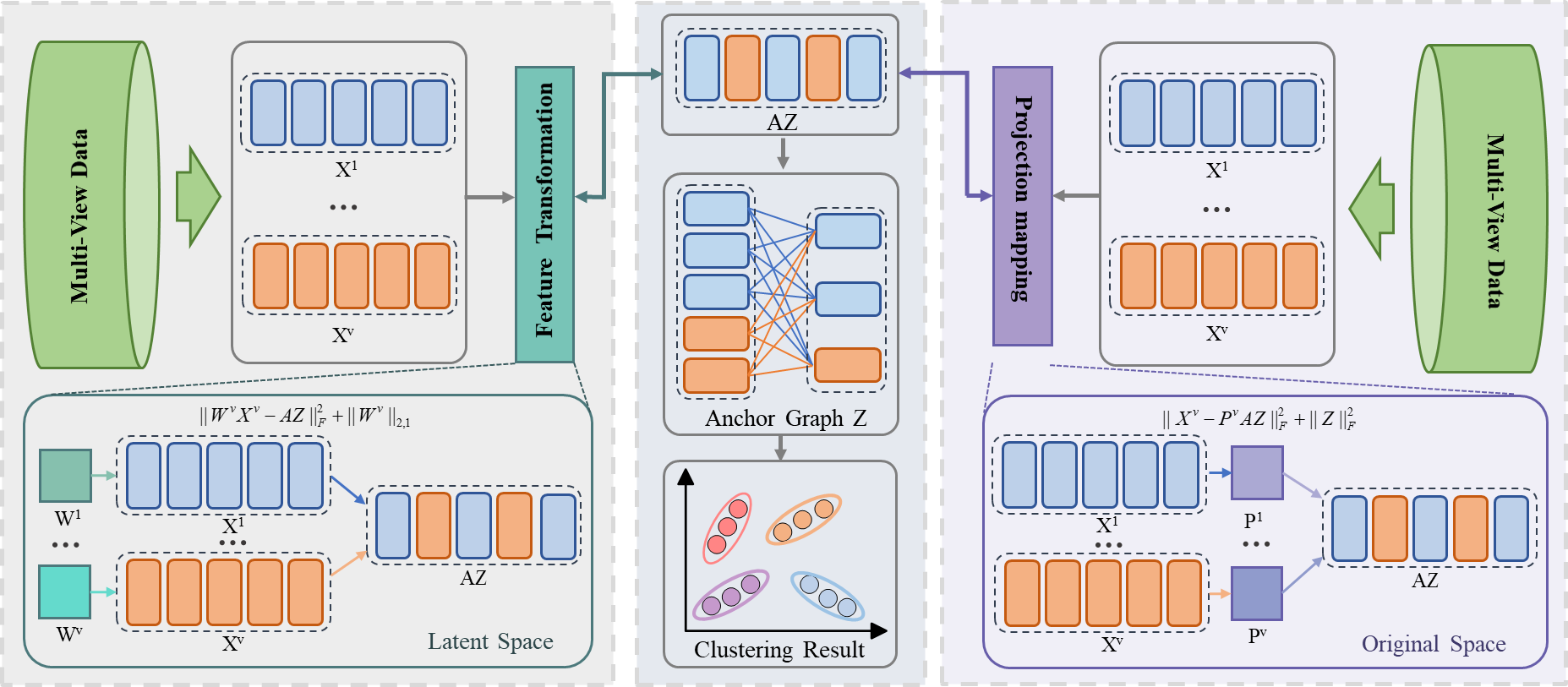}
   \caption{The framework of our DSCMC method. Our algorithm learns the consistent anchor graph collaboratively through the original space and the latent space.}
   \label{fig1}
\end{figure*}

\subsection{Anchor graph-based method}
The MF approach may be limited when the dimension and sample size become greater \cite{YangZNWYW21}. In order to reduce the significant impact of complexity on memory and computational speed, the anchor graph-based method has been proposed. Some early methods used heuristic sampling strategies, such as $k$-means or random sampling. However, this anchor selection and graph construction method are not unified, which may reduce clustering performance. In order to solve this problem, a method called fast parameter free multi-view subspace clustering with consumption anchor guidance (FPMVS) \cite{FPMVS} was proposed. FPMVS attempts to jointly optimize anchor point learning and graph construction to learn consistent graph structures of higher quality as follows
\begin{equation}
\begin{aligned}
&\min_{A,P^v,Z,\alpha_v}\sum_{v=1}^V\alpha_v^2\|X^v-P^vAZ\|_F^2\\
&s.t. \alpha_v^T\mathbf{1}=1,P{^v}^TP^v=I_k,A^TA=I_k,Z\ge 0,Z^T\mathbf{1}=1
\label{anchor_FPMVS}
\end{aligned}
\end{equation}
where $\alpha_v$ is the weight factor of $v$-th view, $P^v$ is the projection matrix of the different views, which can learn the consistent graph structure. $A$ is the anchor matrix and $Z$ is the consist anchor graph. 

The above-mentioned methods have been widely used in large-scale clustering algorithms and have achieved satisfactory results. However, due to the complexity of the data, it is difficult to fully guarantee the orthogonality or independence of the original space \cite{XieYYLY18}. Therefore, the learned consistent anchor graph may not be optimal.

\section{Methodology}
It is well known that the key to the success of multi-view learning lies in fully exploiting the complementarity and consistency among views, which are not available in single-view data. While exploring view information, the local structure of the data itself should be fully considered. In fact, the local structure of the data should be consistent in the original and latent spaces, which has a significant impact on the learning of discriminative anchor graphs. To this end, we propose a dual-space co-training large-scale multi-view clustering (DSCMC) algorithm. The method captures the complementary and consistent information of views while maintaining the local structure in different spaces.

\subsection{DSCMC: Formulation}
As previously analyzed, in order to adequately capture view information and maintain the local structure of the data, an effective co-training model is proposed, whose framework is shown in Figure \ref{fig1}. Specifically, in order to obtain complementary information between different views, we introduce a projection matrix $P^v$, which contains heterogeneous information from each view and can guide the latent representation $AZ$ to project back to the original space. Besides, the feature transformation matrix $W^v$ is proposed to map samples to latent space, reduce noise points and outlier interference, and capture consistent information within the view. 

The above two processes optimize each other for co-training and maintain the local structure of the different spaces. Overall, the objective function of our model can be expressed as 
\begin{equation}
\begin{aligned}
&\min_{W^v,P^v,A,Z}\underbrace{\sum_{v=1}^V\|X^v-P^vAZ\|_F^2}_{\text{Complementary\ information}}+\lambda_2\|Z\|_F^2\\
&+\lambda_1\underbrace{\sum_{v=1}^V\|W^vX^v-AZ\|_F^2}_{\text{Consistent\ information}}
+\lambda_3\|W^v\|_{2,1}, \\ 
&s.t. P{^v}^TP^v=I_k, A^TA=I_k, Z\ge 0,Z^T\mathbf{1}=1
\label{Obj_ours}
\end{aligned}
\end{equation}
where $P^v$ is the projection matrix of the different views. $A$ is the anchor matrix. To ensure independence and avoid trivial solutions, we impose an orthogonality constraint, limiting interactions between vectors in the matrix $P^v$ and $A$. $Z$ is the learned latent consistent graph and $W^v$ is used to choose distinguishing features. The $\ell_{2,1}$ norm is proposed to induce structured sparsity, enabling the selection of discriminative feature transformations to the latent space. $\lambda_1,\lambda_2,\lambda_3$ are the hyperparameters.

Next, we present the algorithmic optimization process of our DSCMC.

\subsection{DSCMC: Optimization}
Since the proposed algorithm is difficult to solve directly, an alternating iteration optimization method has been proposed for solving it. Using this iterative strategy, the original problem can be decomposed into the following subproblems.

\textbf{Update $P^v$:} Fixing $W^v$, $A$, $Z$, then $P^v$ can be updated by solving the following problem
\begin{equation}
\begin{aligned}
\min_{P^v}\sum_{v=1}^V\|X^v-P^vAZ\|_F^2 \quad s.t.P^v{^T}P^v=I_k
\label{update_P}
\end{aligned}
\end{equation}

Considering that $P^v$ is independent for each view, it can be solved separately. From the definition of the Frobenius norm, we can rewrite Eq.(\ref{update_P}) as
\begin{equation}
\begin{aligned}
&\min_{P^v}Tr(X{^v}^TX^v-2P{^v}^TX^vZ^TA^T+Z^TZ) \\ &s.t.P^v{^T}P^v=I_k
\label{update_P1}
\end{aligned}
\end{equation}

Since the solution to the subproblem is only related to $P^v$, the optimization problem shown in Eq.(\ref{update_P1}) can be further simplified to
\begin{equation}
\begin{aligned}
&\max_{P^v}Tr(2P{^v}^TX^vZ^TA^T) \\ &s.t.P^v{^T}P^v=I_k
\label{update_P2}
\end{aligned}
\end{equation}
Eq.(\ref{update_P2}) is an Orthogonal Procrustes Problem (OPP) that can be solved by singular value decomposition (SVD) \cite{WenHFFYZ19}. Assuming that the singular value of $X^vZ^TA^T$ is $U_P\Sigma_P V_P^T$, then we can obtain the result of $P^v$ by $U_PV_P^T$.

\textbf{Update $A$:} Once $P^v, W^v, Z$ are fixed, $A$ can be updated by the following subproblem
\begin{equation}
\begin{aligned}
&\min_{A}\sum_{v=1}^V\|X^v-P^vAZ\|_F^2+\lambda_1\sum_{v=1}^V||W^vX^v-AZ||_F^2 \\
&s.t.A{^T}A=I_k
\label{update_A}
\end{aligned}
\end{equation}

Similar to solve for $P^v$, Eq.(\ref{update_A}) can be formulated as
\begin{equation}
\begin{aligned}
&\max_{A}Tr(A^T(P{^v}^TXZ^T+\lambda_1W^vXZ^T) )\\ &s.t.A{^T}A=I_k
\label{update_A1}
\end{aligned}
\end{equation}
The solution to $A$ has a closed-form solution, that is, $A = U_AV_A^T$, where $U_A$ and $V_A$ represent the left and right singular value operators after SVD of $P{^v}^TX^vZ^T+\lambda_1W^vX^vZ^T$ respectively.

\textbf{Update $W^v$:} When $P^v$, $A$, $Z$ are all fixed, $W^v$ can be obtained by
\begin{equation}
\begin{aligned}
&\min_{W^v}\lambda_1\sum_{v=1}^V\|W^vX^v-AZ\|_F^2+\lambda_3\|W^v\|_{2,1}
\label{update_W}
\end{aligned}
\end{equation}
By the definition of the $\ell_{2,1}$ norm, we can obtain 
\begin{equation}
\begin{aligned}
\|W^v\|_{2,1}=Tr(W{^v}^T\Phi^v W^v)
\label{update_W1}
\end{aligned}
\end{equation}
where $\Phi^v$ denotes diagonal matrix of $W^v$. Therefore, Eq.(\ref{update_W}) can be rewritten as
\begin{equation}
\begin{aligned}
&\min_{W^v}\lambda_1\sum_{v=1}^V\|W^vX^v-AZ\|_F^2+\lambda_3Tr((W{^v}^T\Phi^v W^v)
\label{update_W2}
\end{aligned}
\end{equation}
This problem can be directly derived as
\begin{equation}
\begin{aligned}
W^v=\lambda_1AZX{^v}^T(\lambda_1X^vX{^v}^T+\lambda_3 \Phi^v)^{-1}
\label{update_W3}
\end{aligned}
\end{equation}

\textbf{Update $Z$:} $Z$ can be obtained by fixing other variables to solve the following problem
\begin{equation}
\begin{aligned}
&\min_{Z}\sum_{v=1}^V\|X^v-P^vAZ\|_F^2+\lambda_1\sum_{v=1}^V\|W^vX^v-AZ\|_F^2\\
&+\lambda_2\|Z\|_F^2 \quad s.t. Z\ge 0, Z^T\mathbf{1}=1
\label{update_Z}
\end{aligned}
\end{equation}
In fact, Eq.(\ref{update_Z}) is a classical quadratic programming (QP) problem \cite{FPMVS} and can be written as 
\begin{equation}
\begin{aligned}
&\min_{Z}\frac{1}{2}Z_{:,j}^TH_ZZ_{:,j}+F_Z^TZ_{:,j}\\
&s.t. Z\ge 0, Z^T\mathbf{1}=1
\label{update_Z1}
\end{aligned}
\end{equation}
where $Z_{:,j}$ is the $j$-th column of $Z$, $H_Z=2(V+\lambda_1+\lambda_2) I_m$, and $F_Z=-2\sum_{v=1}^V\lambda_1X_{:,j}{^v}^TW{^v}^TA-2\sum_{v=1}^VX_{:,j}{^v}^TP^vA$. This quadratic programming problem can be solved using the quadprog toolbox of Matlab.

The optimal solution process of the algorithm is summarised in Algorithm \ref{alg:algorithm}. After obtaining $Z$, we perform an SVD on $Z$ to obtain its right singular vectors $M$ and then perform $k$-means clustering on $M$.
\renewcommand{\algorithmicrequire}{ \textbf{Input:}} 
\renewcommand{\algorithmicensure}{ \textbf{Output:}} 
\begin{algorithm}[htbp]
\caption{The procedure of DSCMC}
\label{alg:algorithm}
\begin{algorithmic}[1] 
\REQUIRE Multi-view data $X^v$ , cluster number $k$, and parameters $\lambda_1,\lambda_2,\lambda_3$\\
\ENSURE Applying $k$-means to $M$
\STATE Initialize on $W^v$, $P^v$ $A$, $Z$.
\WHILE{not converged}
\STATE Update $P^v$ via Eq. (\ref{update_P2}).
\STATE Update $A$ via Eq. (\ref{update_A1}).
\STATE Update $W^v$ via Eq. (\ref{update_W3}).
\STATE Update $Z$ via Eq. (\ref{update_Z1}).
\ENDWHILE
\STATE \textbf{Return} The right singular vectors $M$ by performing SVD on $Z$
\end{algorithmic}
\end{algorithm}

\begin{figure*}
  \centering
  \subfigure[WebKB-4]{
	\begin{minipage}[t]{0.3\linewidth}
	\centering
	\includegraphics[width=1.8in]{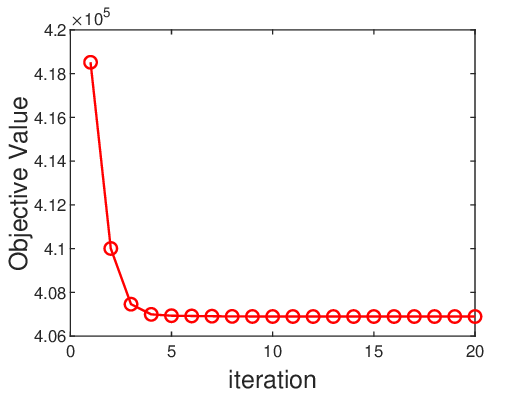}
\end{minipage}
   }%
  \subfigure[Caltech101-20]{
	\begin{minipage}[t]{0.3\linewidth}
	\centering
	\includegraphics[width=1.8in]{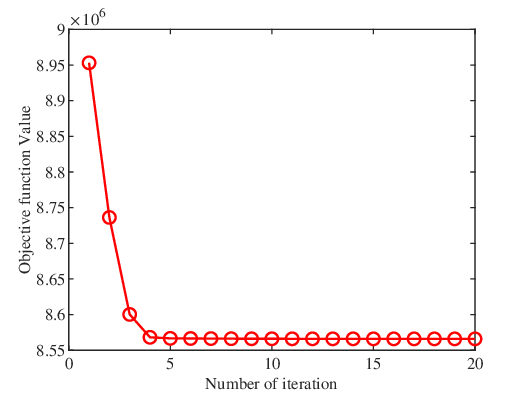}
\end{minipage}
   }%
  \subfigure[YoutubeFace]{
	\begin{minipage}[t]{0.3\linewidth}
	\centering
	\includegraphics[width=1.8in]{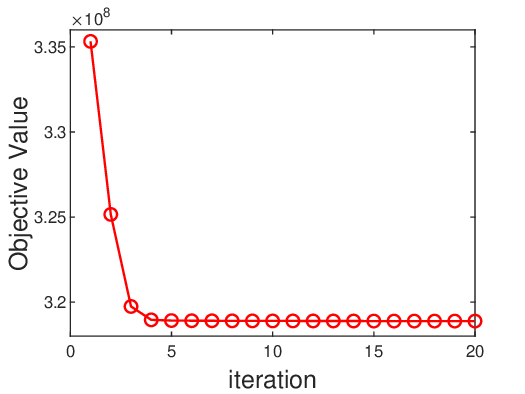}
\end{minipage}
   }%
  \caption{Convergence of three benchmark datasets.}
  \label{Fig.lable}
\end{figure*}

\subsection{Computational Complexity}
Our model consists of four main components. For $P^v$, the iterative computation consists mainly of two parts, i.e. SVD decomposition and matrix multiplication, which has an operational complexity of $\mathcal{O}(ck^2)$ and $\mathcal{O}(ckn)$. Similarly, the complexity required to update variable $A$ are $\mathcal{O}(2k^2n)$ and $\mathcal{O}(2ck^2)$. Besides, the time complexity of update $W^v$ requires $\mathcal{O}(ckn+ck^2)$. And the optimization of $Z$ is a quadratic programming (QP) problem with a total computational complexity of $\mathcal{O}(k^3n)$. In summary, the operational complexity of DSCMC is $\mathcal{O}(4ck^2+2ckn+2k^2n+k^3n)$. Normally, the original data satisfies $k \ll n$, and $c \ll n$, the computational complexity of the DSCMC algorithm is almost linearly related to the number of samples, that is $\mathcal{O}(n)$.

\subsection{Convergence analysis}
The objective function of DSCMC involves four variables, making it challenging to directly prove its strong convergence. The algorithm employs alternating iterations, updating one variable while keeping the others fixed. To demonstrate convergence, we plot the relationship between the number of iterations and the objective function in Figure \ref{Fig.lable}. The objective function value, denoted as $Obj$, is calculated using the expression: $Obj = (\|X^v - P^vAZ\|_F^2 + \lambda_1\|W^vX^v - AZ\|_F^2 + \lambda_2\|Z\|_F^2 + \lambda_3\|W^v\|_{2,1})$. As shown in Figure \ref{Fig.lable}, the objective function value monotonically decreases and converges within approximately 10 iterations. More theoretical proofs are available in the supplementary material. 

\subsection{Links with other methods}
In this section, the connections and differences between the proposed DSCMC and the two most similar algorithms (i.e. SMVSC and OMSC) are explored.

\textbf{Connections to SMVSC:} The objective function of SMVSC is shown as Eq.(\ref{SMVSC})
\begin{equation}
\begin{aligned}
&\min_{A,P^v,Z,\alpha_v}\sum_{v=1}^V\alpha_v^2\|X^v-P^vAZ\|_F^2+ \lambda\|Z\|_F^2 \\
&s.t. \alpha_v^T\mathbf{1}=1,P{^v}^TP{^v}=I,A^TA=I_m,Z\ge 0,Z^T\mathbf{1}=1
\label{SMVSC}
\end{aligned}
\end{equation}
Compared to SMVSC, our algorithm has many differences. Firstly, the feature transformation matrix $W^v$ is used to transform the data of the original space into the latent space, which improves the quality of the latent anchor graph. Then $\ell_{2,1}$ norm is proposed to learn a better anchor graph. Therefore, our algorithm outperforms SMVSC.

\textbf{Connections to OMSC:} The model of OMSC can be described as:
\begin{equation}
\begin{aligned}
&\min_{\alpha_v,P^v,A,Z,G,F,}\sum_{v=1}^V\alpha_v^2\|X^v-P^vAZ\|_F^2+\lambda\|Z-GF\|_F^2 \\
&s.t. \alpha_v^T\mathbf{1}=1,P{^v}^TP{^v}=I,A^TA=I,Z\ge 0,Z^T\mathbf{1}=1\\
&G^TG=I,F_{ij}\in \{0,1\},\sum_{i=1}^kF_{ij}=1, \ \forall j=1,2,...,n
\label{OMSC}
\end{aligned}
\end{equation}
As illustrated in Eq.(\ref{OMSC}), OMSC incorporates partition information and graph construction into a unified framework. If we make $F_{ij}=0$, OMSC will degrade to SMVSC. In contrast, we use the projection matrix $P^v$ and the feature transformation matrix $W^v$ for co-training, which allows the learned semantic information to be effectively represented in both the original space and the latent space, resulting in better features. In addition, the element-wise strategy is proven to perform better when the view clustering structure is not clear. Hence, the clustering performance of DSCMC is better than OMSC.

\section{Experiments and analysis}
In this section, we conduct experiments on nine benchmark datasets to validate the performance of the proposed algorithm. In the experiments, we compare DSCMC with nine state-of-the-art algorithms. Furthermore, we conduct an analysis of the construction of the complete graph and provide visual representations. Lastly, ablation experiments are performed to assess the effectiveness of our proposed algorithm.

\subsection{Benchmark datasets}
In the experiments, we use nine commonly used benchmark datasets: 3-sources, WebKB-4, Caltech101-7, Caltech101-20, BDGP, NUSWIDE, VGGFace2-50, CIFAR100, and YoutubeFace. These datasets include text datasets, image datasets, and face datasets. The largest dataset contains over 120,000 samples. The specific characteristics of these datasets are shown in Table \ref{table1}.

\begin{table}[htbp]
\caption{Dataset Description}
\centering
\resizebox{\linewidth}{!}{
\begin{tabular}{cccccc}
\toprule
Type & Dataset & Sample & Class & View & Dimension \\
\midrule
\multirow{2}{*}{Texts} & 3-sources & 169 & 6 & 3 & 3560,3631,3068\\
 & WebKB-4 & 203 & 4 & 3 & 1703,230,230 \\
\cmidrule(lr){1-6}
\multirow{5}{*}{Images}& Caltech101-7 & 1474 & 7 & 6 & 48,40,254,1984,512,928\\
& Caltech101-20 & 2386 & 20 & 6 & 48,40,254,1984,512,928\\
& BDGP & 2500 & 5 & 2 & 1750,79 \\
& NUSWIDE & 30,000 & 31 & 5 & 65,226,145,74,129 \\
& CIFAR100 & 50,000 & 100 & 3 & 512,2048,1024 \\
\cmidrule(lr){1-6}
\multirow{2}{*}{Faces}& VGGFace2-50 & 34,027 & 50 & 4 & 944,576,512,640  \\
& YoutubeFace & 126,054 & 50 & 4 & 944,576,512,640 \\
\bottomrule
\end{tabular}}
\label{table1}
\end{table}

\subsection{Comparison algorithm}
We compare our algorithm with the following state-of-the-art methods: Diversity-induced multi-view subspace clustering (DiMSC) \cite{CaoZFLZ15}, Parameter-Free Auto-Weighted Multiple Graph Learning (AMGL) \cite{AMGL}, Latent Multi-view Subspace Clustering (LMSC) \cite{ZhangHFZC17}, Graph-Based Multi-View Clustering (GMC) \cite{GMC}, Multiview Clustering: A Scalable and Parameter-Free Bipartite Graph Fusion Method (SFMC) \cite{SFMC}, Large-scale Multi-view Subspace Clustering in Linear Time (LMVSC) \cite{LMVSC}, One-pass Multi-view Clustering for Large-scale Data (OPMC) \cite{OPMC}, Scalable Multi-view Subspace Clustering with Unified Anchors(SMVSC) \cite{SMVSC}, Fast Parameter-Free Multi-View Subspace Clustering With Consensus Anchor Guidance (FPMVS) \cite{FPMVS}, Efficient Orthogonal Multi-view Subspace Clustering (OMSC) \cite{OMSC}, Auto-weighted Multi-view Clustering for Large-scale Data (AWMVC) \cite{AWMVC}. Among them, SFMC, LMVSC, OPMC,  FPMVS, SMVSC, OMSC, and AWMVC are large-scale multi-view clustering methods.

\subsection{Experimental settings}
In our experiments, we set the initial values of $W^v$, $P^v$, $A$, and $Z$ to zero. The maximum number of iterations is fixed at 20. For algorithms requiring the use of $k$-means to obtain the final clustering results, we perform 50 runs to mitigate the randomness associated with $k$-means initialization. For the anchor matrix $A\in\mathbb{R}^{m \times m}$, we set the number of anchors to the number of clusters, i.e., $m=k$. The choice of the number of anchors is discussed in the supplementary material. All experiments were conducted on a desktop computer with an Intel(R) Core(TM) i9-13900K CPU @ 3.00 GHz and 64GB of RAM. The programming software employed was Matlab R2021a (64-bit). To assess performance, we employ four common clustering indicators: Accuracy (ACC), Normalized Mutual Information (NMI), F-score, and Adjusted Rand Index (ARI).

\subsection{Using $\ell_{2,1}$ norm for feature transformation}
In cluster analysis tasks, one needs to extract efficient and robust features \cite{NieHCD10, Fu0CZW21}. However, data from different classes have specific feature attributes and a particular feature may be important for one class but not for another. Therefore, we need to capture the most discriminative features of each class in order to separate them by category. 

Instead of using the common Frobenius norm in the feature transformation process, we introduced the $\ell_{2,1}$ norm in our model. This is because $\ell_{2,1}$ norm is more robust to outliers and provides a comprehensive understanding of the underlying data distribution. We conducted experiments on the Caltech101-20 dataset, where the Frobenius norm and $\ell_{2,1}$ norm are selected for visualization, respectively. The results show that $\ell_{2,1}$ norm can learn clearer structural features than the Frobenius norm. Proof of theory demonstrated in supplementary material.
\begin{figure}[htbp]
   \centering
       \subfigure[Frobenius norm]{
	\begin{minipage}[t]{0.45\linewidth}
		\centering
		\includegraphics[width=1.3in]{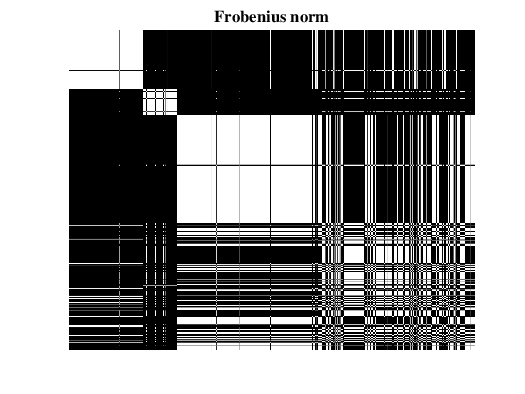}
		\end{minipage}
	}%
        \subfigure[$\ell_{2,1}$ norm]{
		\begin{minipage}[t]{0.45\linewidth}
			\centering
			\includegraphics[width=1.3in]{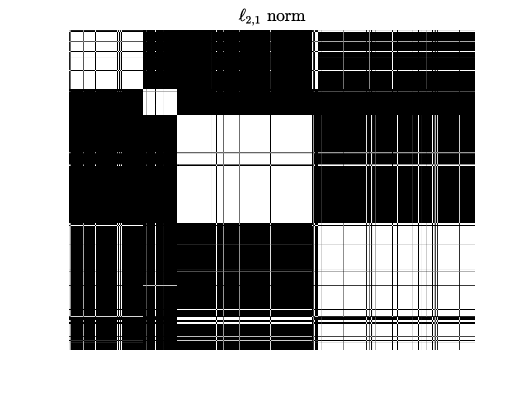}
		\end{minipage}
	}%
  \caption{Complete graph of the Caltech101-20 dataset compared at different norms.}
  \label{fig2}
\end{figure}

\begin{table*}[htbp]
\renewcommand\arraystretch{1.1}
\caption{Experimental results of different algorithms on nine datasets, where the "N/A" symbol indicates that the algorithm was unable to complete the computation due to memory constraints and the "-" symbol indicates that the solution does not exist or is not unique on the dataset.}
\centering
\resizebox{\linewidth}{!}{
\begin{tabular}
{ccccccccccccc}
\toprule
Database & DiMSC & AMGL & LMSC & GMC & SFMC & LMVSC  & OPMC & FPMVS & SMVSC  & OMSC & AWMVC & DSCMC\\
\midrule
\multicolumn{13}{c}{ACC}\\
\cmidrule(lr){1-13}
3-sources & \underline{0.7101} & 0.1865 & 0.7041 & 0.6923 & 0.3491 & 0.4970 & 0.5207 & 0.3491 & 0.6627 & 0.3373 & \textbf{0.7160} & \textbf{0.7160}\\
WebKB-4 & - & 0.3024 & 0.6847 & \underline{0.7586} & 0.5419 & 0.7192 & 0.6847 & 0.6404 & 0.7291 & 0.6502 & 0.6453 & \textbf{0.8128} \\
Caltech101-7 & 0.4389 & 0.3125 & 0.6092 & 0.6920 & 0.5651 & 0.3446 & 0.5475 & 0.6920 & \underline{0.7802} & 0.6683 & 0.4518 & \textbf{0.8616} \\
Caltech101-20 & 0.4388 & 0.3750 & 0.5000 & 0.4564 & 0.5947 & 0.4782 & 0.5448 & 0.6639 & 0.6442 & \underline{0.6676}  & 0.4845 & \textbf{0.7619}\\
BDGP & 0.7452 & 0.4474 & 0.4680 & 0.7324 & 0.3780 & \underline{0.7916}  & 0.4584 & 0.6460 & 0.6452 & 0.6008 & 0.4792 & \textbf{0.9456}\\
NUSWIDE & N/A & N/A & N/A & N/A & 0.1689 & 0.1495 & 0.1607 & 0.1944 & 0.1916 & \underline{0.1994}  & 0.1300 & \textbf{0.2054}\\
VGGFace2-50 & N/A & N/A & N/A & N/A & N/A & 0.1271 & 0.1198 & 0.1136 & 0.1218 & 0.1154 & \underline{0.1442} & \textbf{0.1461}\\
CIFAR100 & N/A & N/A & N/A & N/A & N/A & \underline{0.9232} & 0.8780 & 0.7528 & 0.7466 & 0.8700 & 0.9051 & \textbf{0.9307}\\
YoutubeFace & N/A  & N/A & N/A & N/A & N/A & \textbf{0.7515} & 0.7002 & 0.6904 & 0.6581 & 0.7152 & 0.7260 & \underline{0.7397}\\
\cmidrule(lr){1-13}
\multicolumn{13}{c}{NMI}\\
\cmidrule(lr){1-13}
3-sources & 0.6289 & 0.0802 & \underline{0.6618} & 0.6216  & 0.0821 & 0.4230 & 0.3706 & 0.1177 & 0.5423 & 0.1040 & 0.5956 & \textbf{0.6689}\\
WebKB-4 & - & 0.0576 & 0.3557 & 0.4219 & 0.0551 & \underline{0.4843} & 0.4018 & 0.2914 & 0.3781 & 0.2048 & 0.4107 & \textbf{0.4916} \\
Caltech101-7 & 0.4287 & 0.3615 & 0.6098 & \underline{0.6595} & 0.5626 & 0.1595 & 0.5062 & 0.5426 & 0.6082 & 0.5416 & 0.5304 & \textbf{0.6654}\\
Caltech101-20 & 0.5655 & 0.5818 & 0.5787 & 0.4809 & 0.5641 & 0.5751 & \underline{0.6880} & 0.6387 & 0.6203 & 0.6179 & 0.5304 & \textbf{0.6891}\\
BDGP & \underline{0.7622} & 0.2696 & 0.2486 & 0.7195 & 0.3519 & 0.6680 & 0.2887 & 0.4561 & 0.4614 & 0.4076 & 0.3006 & \textbf{0.8681}\\
NUSWIDE & N/A & N/A & N/A & N/A & 0.0601 & 0.1265 & \textbf{0.1555} & 0.1345 & 0.1287 & 0.1270 & 0.1217 & \underline{0.1413}\\
VGGFace2-50 & N/A & N/A & N/A & N/A & N/A & 0.1457 & 0.1453 & 0.1370 & 0.1427 & 0.1446 & \textbf{0.1641} & \underline{0.1622}\\
CIFAR100 & N/A & N/A & N/A & N/A & N/A & \underline{0.9872} & 0.9800 & 0.9176 & 0.9211 & 0.9792 & 0.9842 & \textbf{0.9885}\\
YoutubeFace  & N/A & N/A & N/A & N/A & N/A & 0.8392 & 0.8327 & 0.8408 & 0.8131 & 0.8527 & \underline{0.8551} & \textbf{0.8559}\\
\cmidrule(lr){1-13}
\multicolumn{13}{c}{Fscore}\\
\cmidrule(lr){1-13}
3-sources & 0.6033 & 0.2780 & 0.6490 & 0.6047  & 0.3873 & 0.4951 & 0.4687 & 0.2662 & 0.5991 & 0.2748 & \underline{0.6508} & \textbf{0.6790}\\
WebKB-4 & - & 0.3782 & 0.6249 & 0.6857 & 0.5620 & 0.6677 & 0.6706 & 0.5684 & \underline{0.6934} & 0.5681 & 0.5880 & \textbf{0.7404}\\
Caltech101-7 & 0.4430 & 0.1697 & 0.6115 & 0.7217 & 0.5855 & 0.3207 & 0.5430 & 0.7007 & \underline{0.7620} & 0.6201 & 0.4961 & \textbf{0.8659} \\
Caltech101-20 & 0.3734 & 0.0952 & 0.4186 & 0.3403 & 0.4303 & 0.3862 & 0.5142 & \underline{0.6917} & 0.6745 & 0.6768 & 0.4961 & \textbf{0.7622}\\
BDGP & \underline{0.7451} & 0.3925 & 0.3561 & 0.7063 & 0.4215 & 0.6282 & 0.3627 & 0.5112 & 0.5323 & 0.4780  & 0.4001 & \textbf{0.9002}\\
NUSWIDE & N/A & N/A & N/A & N/A & 0.1068 & 0.0947 & 0.1017 & \underline{0.1365} & 0.1321 & 0.1323  & 0.0823 & \textbf{0.2301}\\
VGGFace2-50 & N/A & N/A & N/A & N/A & N/A & 0.0606 & 0.0570 & 0.0588 & 0.0575 & 0.0602 & \textbf{0.0701} & \underline{0.0700}\\
CIFAR100 & N/A & N/A & N/A & N/A & N/A & \underline{0.9294} & 0.8978 & 0.7160 & 0.7307 & 0.8840 & 0.9190 & \textbf{0.9401}\\
YoutubeFace & N/A & N/A & N/A & N/A & N/A & 0.6617 & 0.6274 & 0.6439 & 0.5971 & \underline{0.6758} & 0.6710 &
\textbf{0.6807}\\
\cmidrule(lr){1-13}
\multicolumn{13}{c}{ARI}\\
\cmidrule(lr){1-13}
3-sources & 0.4959 & 0.1661 & 0.5591 & 0.4431  & 0.0231 & 0.3333 & 0.3345 & 0.0537 & 0.4835 & 0.0473  & \underline{0.5605} & \textbf{0.5958}\\
WebKB-4 & - & 0.0110 & 0.3910 & 0.4114 & 0.0161 & \underline{0.4956} & 0.4888 & 0.2975 & 0.4789 & 0.2796 & 0.3708 & \textbf{0.5193}\\
Caltech101-7 & 0.2876 & 0.0581 & 0.4696 & 0.5943 & 0.4116 & 0.0990 & 0.3884 & 0.5644 & \underline{0.6436} & 0.4396 & 0.3517 & \textbf{0.7677} \\
Caltech101-20 & 0.3154 & 0.0302 & 0.3542 & 0.1284 & 0.2765 & 0.3279 & 0.4599 & \underline{0.6360} & 0.6190 & 0.6164 & 0.3567 & \textbf{0.7074} \\
BDGP & \underline{0.6752} & 0.1600 & 0.1935 & 0.6152 & 0.1615 & 0.5326 & 0.1939 & 0.3778 & 0.4111 & 0.3393  & 0.2498 & \textbf{0.8752}\\
NUSWIDE & N/A & N/A & N/A & N/A & 0.0124 & 0.0196 & 0.0616 & \underline{0.0656} & 0.0654 & 0.0625  & 0.0409 & \textbf{0.0833}\\
VGGFace2-50 & N/A & N/A & N/A  & N/A & N/A & 0.0411 & 0.0369 & 0.0335 & 0.0366 & 0.0329  & \underline{0.0504} & \textbf{0.0508}\\
CIFAR100 & N/A & N/A & N/A & N/A & N/A & \underline{0.9286} & 0.8967 & 0.7126 & 0.7275 & 0.8827 & 0.9181 & \textbf{0.9395}\\
YoutubeFace & N/A & N/A & N/A  & N/A & N/A & 0.6538 & 0.6182 & 0.6348 & 0.5872 & \underline{0.6683} & 0.6632 & \textbf{0.6731}\\
\bottomrule
\end{tabular}}
\label{table2}
\end{table*}

\subsection{Clustering performance analysis}
We compared DSCMC with the state-of-the-art nine multi-view clustering algorithms on nine datasets. The experimental results are shown in Table \ref{table2}, where “N/A" represents out-of-memory. It is worth noting that bold and underlined indicate the best and the second-best performance respectively. From Table \ref{table2}, we have the following conclusions:

\begin{itemize}
\item In summary, DSCMC exhibits excellent clustering performance on nine commonly used benchmark datasets. Notably, in the text datasets 3-sources and WebKB-4, DSCMC outperforms most compared algorithms. In the image dataset Caltech101-7, DSCMC shows a significant improvement over the other methods (achieving an ACC value 10\% higher than the second-best SMVSC). Additionally, the performance of DSCMC is competitive on the face datasets VGGFace2-50 and YoutubeFace.
\item  Through a comparison between SMVSC and OMSC, it becomes evident that the element-wise method outperforms the view-wise approach, particularly when the clustering structure is not clear enough.
\item Compared to algorithms that perform feature learning only in the original feature space, such as LMVSC, FPMVS, SMVSC, and OMSC, our method adopts a co-training approach in both original and latent space. This strategy contributes to obtaining a higher-quality anchor graph, resulting in significant performance enhancements and improved clustering results.
\item Compared with DiMSC, AMGL, LMSC, and GMC, DSCMC is more advantageous in terms of computational efficiency and storage requirement, which is due to the fact that our method learns the similarity of the samples through the anchor graph, and the computational complexity is reduced to $\mathcal{O}(n)$. Therefore, DSCMC is very effective in dealing with large-scale data clustering tasks.
\end{itemize}

\subsection{Ablation experiment}
Our algorithm learns the discriminative anchor graph in both the original and latent spaces. To show its effectiveness, we designed two modules for comparison: one focused solely on the latent space (only $W^v$) and the other on the original space (only $P^v$). Additionally, we created modules DSCMC-$F$ and DSCMC-$\alpha$ to demonstrate the impact of the $\ell_{2,1}$ norm and element-wise method, respectively. The results of the ablation experiments are shown in Table \ref{table3}. These results clearly illustrate the decline in clustering performance when significant components are removed from DSCMC. Hence, our algorithmic framework effectively enhances the quality of the anchor graph, leading to improved clustering results. For further experimental details of the ablation experiment, refer to the supplementary material.
\begin{table}[!ht]
\renewcommand\arraystretch{1.3}
\caption{The ablation experiment of our algorithm on three benchmark datasets}
\begin{center}
\resizebox{1\linewidth}{!}{
\begin{tabular}{c|ccc|ccc|ccc}
\hline
\multirow{3}{*}{Model}  & \multicolumn{9}{c}{Datasets} \\[1pt]
\cline{2-10}
~  & \multicolumn{3}{c|}{3-sources} & \multicolumn{3}{c|}{Caltech101-20}& \multicolumn{3}{c}{BDGP}  \\
\cline{2-10}
~ & ACC & NMI & ARI & ACC & NMI & ARI & ACC & NMI & ARI\\
\hline
only $P^v$ & 0.6627 & 0.6213 &0.4741 & 0.6505 & 0.6250 & 0.5556 & 0.6108 & 0.4166 & 0.3657\\[1.2pt]
only $W^v$ & 0.6154 & 0.4326 & 0.4047 & 0.4493 & 0.3341 & 0.2475 & 0.7740 & 0.6923 & 0.6485\\[1.2pt]
DSCMC-$F$ & 0.4201 & 0.1203 & 0.0769 & \underline{0.6882} & \underline{0.6460} &  \underline{0.6320} & \underline{0.9452} & \underline{0.8670} & \underline{0.8744}\\
DSCMC-$\alpha$ & \underline{0.6982} & \underline{0.6574} & \underline{0.5719} & 0.5126 & 0.6183 & 0.3841 & 0.6452 & 0.4305 & 0.3711\\
Ours & \textbf{0.7160} & \textbf{0.6689} & \textbf{0.5958} & \textbf{0.7619} & \textbf{0.6891} & \textbf{0.7074} & \textbf{0.9456} & \textbf{0.8681} & \textbf{0.8752}\\
\hline
\end{tabular}}
\end{center}
\label{table3}
\end{table}

\subsection{Complete graph construction}
To further understand the performance of the algorithm, we visualize the complete graph $Z^TZ \in \mathbb{R}^{n \times n}$ on the Caltech101-20 dataset.

According to subspace theory, a sample may be closer to samples from the same subspace, and samples from different spaces cannot be represented by each other. Therefore, the ideal clustering structure should be close to the block diagonal structure \cite{FengLXY14}. As presented in Figure \ref{Fig3}, our complete graph exhibits a more distinct block structure and contains less redundancy.
\begin{figure}[htbp]
  \centering
  \subfigure[LMVSC]{
	\begin{minipage}[t]{0.28\linewidth}
	\centering
	\includegraphics[width=1.1in]{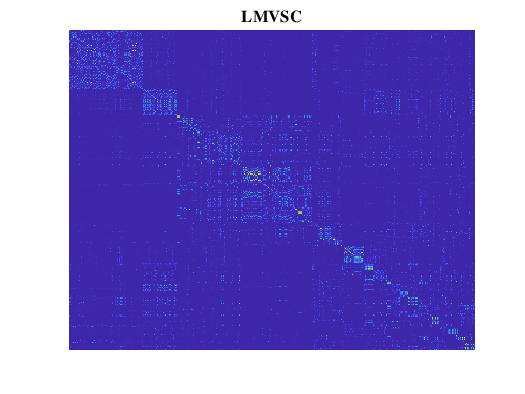}
\end{minipage}
}
  \subfigure[FPMVS]{
	\begin{minipage}[t]{0.28\linewidth}
	\centering
	\includegraphics[width=1.1in]{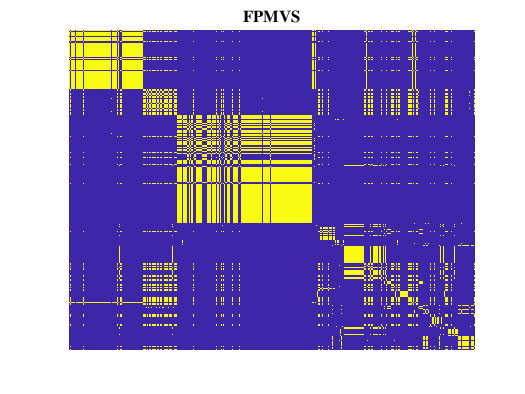}
\end{minipage}
}
  \subfigure[SMVSC]{
	\begin{minipage}[t]{0.28\linewidth}
	\centering
	\includegraphics[width=1.1in]{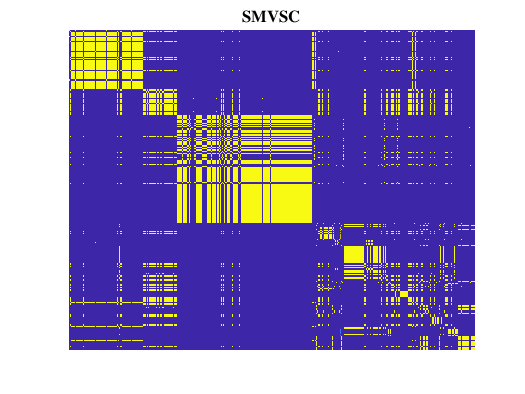}
\end{minipage}
}
  \subfigure[OMSC]{
	\begin{minipage}[t]{0.28\linewidth}
	\centering
	\includegraphics[width=1.1in]{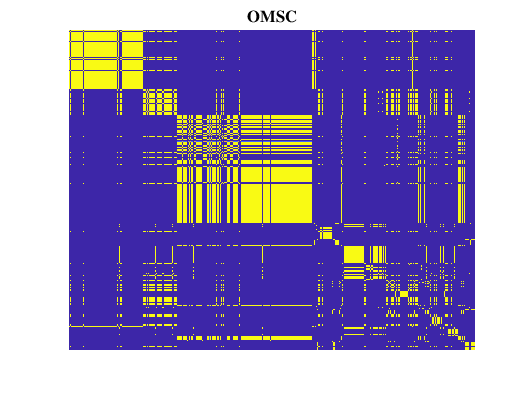}
\end{minipage}
}
  \subfigure[AWMVC]{
	\begin{minipage}[t]{0.28\linewidth}
	\centering
	\includegraphics[width=1.1in]{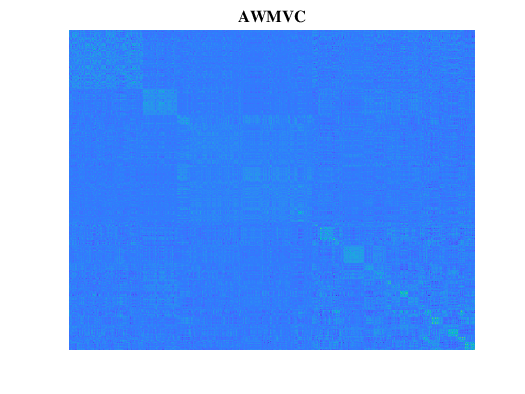}
\end{minipage}
}
  \subfigure[Ours]{
	\begin{minipage}[t]{0.28\linewidth}
	\centering
	\includegraphics[width=1.1in]{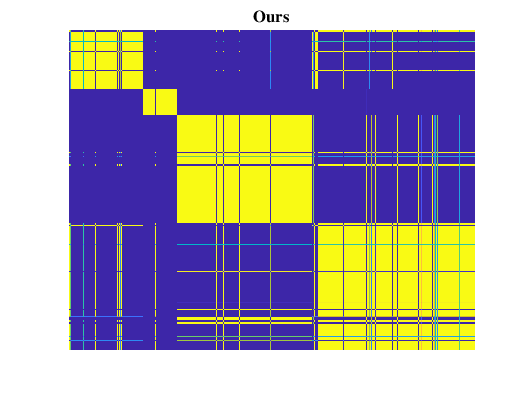}
\end{minipage}
}
  \caption{Comparison of complete graphs constructed by different algorithms.}
  \label{Fig3}
\end{figure}

\subsection{Parameter sensitivity and analysis}
As shown in Eq.(\ref{Obj_ours}), our algorithm contains three hyperparameters, $\lambda_1$, $\lambda_2$, and $\lambda_3$. In our experiments, the range of all three parameters in a wide range from $[10^{-3},10^{-2},10^{-1},10^{0},10^{1},10^{2},10^{3}]$. To test the sensitivity of the parameters, different combinations were performed. Firstly, we fixed $\lambda_3$ to tune the other two parameters, and the results are shown in Figure \ref{fig4} (a). Then, we fixed $\lambda_1$ and $\lambda_2$ to adjust $\lambda_3$, and the sensitivity can be seen in Figure \ref{fig4} (b). Figure \ref{fig4} implies that DSCMC can achieve better results on the NUSWIDE dataset when $\lambda_1 \le 10^0$, $\lambda_2 \le 10^0$, and $\lambda_3 \le 10^0$.
\begin{figure}[htbp]
\centering
\includegraphics[width=1\linewidth]{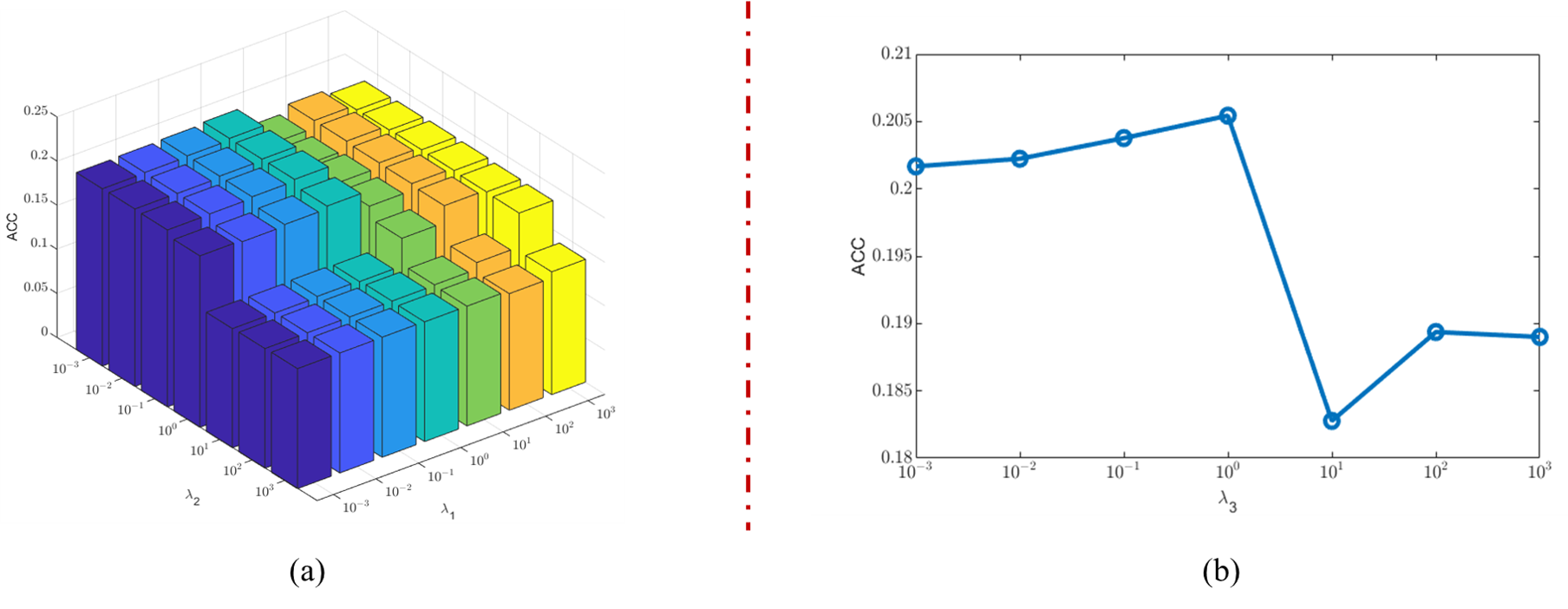}
\caption{Parameter sensitivity analysis of our method on NUSWIDE dataset, where (a) fix $\lambda_3$ to tune $\lambda_1$ and $\lambda_2$; (b)$\lambda_1$ and $\lambda_2$ are fixed and tune $\lambda_3$.}
\label{fig4}
\end{figure}

\section{Conclusion}
In this paper, we introduce a novel clustering approach called Dual-Space Co-training Large-scale Multi-view Clustering (DSCMC). Our method performs co-training in two distinct spaces to improve clustering performance. The method involves the use of a projection matrix to obtain the latent anchor graph from different views and the introduction of a transformation matrix to map data from the original space to the latent consistent space. By co-training these two spaces, the quality of anchor graphs is enhanced, resulting in better clustering performance. A notable advantage of DSCMC is the element-wise strategy, which helps mitigate the influence of diverse information between different views. Furthermore, DSCMC has a linear computational complexity, which makes it a highly efficient solution for clustering large-scale datasets. The experimental results consistently demonstrate the superiority of DSCMC over existing approaches in terms of clustering accuracy and scalability.

{\appendix[Proof of convergence]
The objective function of our model is shown below:
\begin{equation}
\begin{aligned}
\min_{W^v,P^v,A,Z}&\sum_{v=1}^V\|X^v-P^vAZ\|_F^2+\lambda_1\sum_{v=1}^V\|W^vX^v-AZ\|_F^2\\
&+\lambda_2\|Z\|_F^2+\lambda_3\|W^v\|_{2,1}, \\ 
&s.t. P{^v}^TP^v=I_k, A^TA=I_k, Z\ge 0,Z^T\mathbf{1}=1
\end{aligned}
\end{equation}
To prove the convergence of the objective function, the following theorem is granted.
\begin{theorem}
The objective function of our algorithm will be non-increasing for the iteration.
\label{theorem1}
\end{theorem}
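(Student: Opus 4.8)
The plan is to show that each of the four block updates in Algorithm~\ref{alg:algorithm} leaves the objective in Eq.~(\ref{Obj_ours}) non-increased, and then to chain these inequalities over one complete sweep of updates. Write $\mathcal{J}(W,P,A,Z)$ for the objective, where $W$, $P$ abbreviate the collections $\{W^v\}_v$, $\{P^v\}_v$, and let $(W_t,P_t,A_t,Z_t)$ be the iterate after the $t$-th outer iteration. The target inequality is $\mathcal{J}(W_{t+1},P_{t+1},A_{t+1},Z_{t+1}) \le \mathcal{J}(W_t,P_t,A_t,Z_t)$; combined with the trivial lower bound $\mathcal{J}\ge 0$ this yields convergence of the sequence of objective values, which is exactly what Theorem~\ref{theorem1} asserts.

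First I would dispose of the three ``exact-subproblem'' blocks. With $W,A,Z$ held fixed, the $P^v$-subproblem equals Eq.~(\ref{update_P2}) up to terms independent of $P^v$; this is an Orthogonal Procrustes problem whose global minimizer over $\{P^v:\,P^{vT}P^v=I_k\}$ is precisely the closed form $U_PV_P^T$, and since the previous iterate $P_t^v$ is feasible for this subproblem, the update cannot increase $\mathcal{J}$. The identical argument applies to the $A$-subproblem Eq.~(\ref{update_A1}). For $Z$, the subproblem Eq.~(\ref{update_Z1}) is, column by column, a convex quadratic program with positive-definite Hessian $H_Z=2(V+\lambda_1+\lambda_2)I_m$ over the convex set $\{Z\ge 0,\,Z^T\mathbf{1}=\mathbf{1}\}$, so solving it to optimality again gives a value no larger than at $Z_t$. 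In each of these three cases the only point being used is that we solve the subproblem exactly while the old iterate remains feasible.

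The main obstacle is the $W^v$ update, because the $\ell_{2,1}$ penalty is handled through the iteratively reweighted surrogate $\mathrm{Tr}(W^{vT}\Phi^v W^v)$ with the diagonal matrix $\Phi^v$ \emph{frozen} at the value determined by $W_t^v$ (see Eq.~(\ref{update_W1})--(\ref{update_W3})). Minimizing Eq.~(\ref{update_W2}) decreases this surrogate, not the true objective, so I would invoke the standard auxiliary-function (majorization) argument of \cite{NieHCD10}: applying the elementary inequality $\sqrt{a}-\tfrac{a}{2\sqrt{b}}\le\sqrt{b}-\tfrac{b}{2\sqrt{b}}$, valid for all $a,b>0$, row-wise to $W^v$ shows that any decrease of $\lambda_1\|W^vX^v-AZ\|_F^2+\lambda_3\,\mathrm{Tr}(W^{vT}\Phi_t^v W^v)$ forces a decrease of the genuine term $\lambda_1\|W^vX^v-AZ\|_F^2+\lambda_3\|W^v\|_{2,1}$; since all remaining terms of $\mathcal{J}$ are free of $W^v$, the full objective does not increase. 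This concavity step is where the care is needed; everything else is bookkeeping.

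Finally I would assemble the chain
\[
\begin{aligned}
\mathcal{J}(W_t,P_t,A_t,Z_t) &\ge \mathcal{J}(W_t,P_{t+1},A_t,Z_t)\\
&\ge \mathcal{J}(W_t,P_{t+1},A_{t+1},Z_t)\\
&\ge \mathcal{J}(W_{t+1},P_{t+1},A_{t+1},Z_t)\\
&\ge \mathcal{J}(W_{t+1},P_{t+1},A_{t+1},Z_{t+1}),
\end{aligned}
\]
where the first, second and fourth steps are the exact-subproblem arguments for $P^v$, $A$ and $Z$, and the third is the reweighted-$\ell_{2,1}$ argument for $W^v$. Hence $\{\mathcal{J}(W_t,P_t,A_t,Z_t)\}_t$ is non-increasing; being bounded below by $0$, it converges, which establishes Theorem~\ref{theorem1}. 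I would additionally remark that upgrading this monotonicity to convergence of the iterates themselves to a stationary point would require further regularity assumptions and is outside the scope of the present claim.
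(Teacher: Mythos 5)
Your proof is correct, and it reaches the same conclusion by a noticeably cleaner route than the paper. Both arguments share the overall alternating-minimization skeleton (show each block update is non-increasing, chain over one sweep, invoke the lower bound $0$), but the justifications of the individual blocks differ. For the $P^v$, $A$ and $Z$ blocks the paper introduces a separate Lemma asserting $c^2-\tfrac{c^4}{2c^2}\ge b^2-\tfrac{b^4}{2c^2}$ and combines it with an ``inexact Majorization--Minimization'' inequality whose content is essentially the monotonicity being proved; your direct argument --- the subproblem is solved to global optimality over a feasible set that still contains the previous iterate, hence the value cannot increase --- is what actually underwrites those steps, and it dispenses with the detour entirely. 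For the $W^v$ block you are also more careful than the paper: you identify the real difficulty, namely that Eq.~(\ref{update_W3}) minimizes the reweighted surrogate $\lambda_1\|W^vX^v-AZ\|_F^2+\lambda_3\,\mathrm{Tr}(W^{vT}\Phi^v W^v)$ with $\Phi^v$ frozen at the old iterate rather than the true $\ell_{2,1}$-penalized objective, and you close the gap with the standard row-wise auxiliary-function inequality of Nie et al., whereas the paper simply cites a convergence theorem for $\ell_{2,1}$ minimization without addressing the surrogate. Your chain of inequalities is also stated in the correct ``telescoping'' form, updating one block at a time while carrying forward the already-updated blocks, which matches the order of Algorithm~\ref{alg:algorithm}; the paper's displayed inequalities each compare against the un-updated iterate and only implicitly compose. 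In short, your proof buys rigor and economy; the paper's buys nothing extra here, so no idea is missing from your version.
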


Firstly, the objective function can be written as
\begin{equation}
\begin{aligned}
&\Gamma(\{\mathbf{P}^v\}_{v=1}^V, \{\mathbf{W}^v\}_{v=1}^V,A,Z)=\\
&\min_{W^v,P^v,A,Z}\sum_{v=1}^V\|X^v-P^vAZ\|_F^2+\lambda_1\sum_{v=1}^V\|W^vX^v-AZ\|_F^2\\
&+\lambda_2\|Z\|_F^2+\lambda_3\|W^v\|_{2,1} \\ 
\label{conv1}
\end{aligned}
\end{equation}

Next, we present the proof of Theorem \ref{theorem1}. 
\begin{proof}
For clarity, we firstly define the $t$-iteration symbols by $\{\mathbf{P_t^v}\}_{v=1}^V$, $ \{\mathbf{W_t^v}\}_{v=1}^V$, $A_t$, $Z_t$. To prove Theorem \ref{theorem1}, we introduce the Lemma \ref{lemma2}:
\begin{Lemma}
If there exist two positive constants $b$ and $c$, the following inequality holds.
\begin{equation}
\begin{aligned}
c^2-\frac{c^4}{2c^2}\ge b^2-\frac{b^4}{2c^2}
\end{aligned}
\end{equation}
\label{lemma2}
\end{Lemma}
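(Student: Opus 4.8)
The plan is to reduce the claimed inequality to the manifest nonnegativity of a perfect square, so that no inequality manipulation beyond a single algebraic rearrangement is needed. The starting point is the observation that the subtracted term on the left collapses: since $c$ is a positive constant, $\frac{c^4}{2c^2}=\frac{c^2}{2}$, so the left-hand side is simply $c^2-\frac{c^2}{2}=\frac{c^2}{2}$. Hence proving the Lemma is equivalent to establishing
\begin{equation}
\frac{c^2}{2}\ge b^2-\frac{b^4}{2c^2}.
\end{equation}

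First I would clear the remaining denominator. Because $c>0$ we have $2c^2>0$, so multiplying both sides by $2c^2$ preserves the direction of the inequality and produces the polynomial statement $c^4\ge 2b^2c^2-b^4$, equivalently $c^4-2b^2c^2+b^4\ge 0$. The key step is then to recognize the left-hand member as a complete square, $c^4-2b^2c^2+b^4=(c^2-b^2)^2$. Since the square of any real number is nonnegative, $(c^2-b^2)^2\ge 0$ holds without further hypotheses, which proves the Lemma; moreover equality holds exactly when $b=c$, a fact that will matter for characterizing the fixed points of the reweighting iteration.

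There is essentially no obstacle to this argument — it is a one-line factorization once the left side has been simplified — and the only subtlety worth stating explicitly is where positivity of $c$ enters. Positivity of $c$ is used twice: once so that the simplification $\frac{c^4}{2c^2}=\frac{c^2}{2}$ is legitimate (the denominator must not vanish), and once to guarantee that multiplying through by $2c^2$ does not reverse the inequality. By contrast, positivity of $b$ is not actually required by the algebra, although it is the natural setting in the intended use, where $b$ and $c$ encode successive values of the per-row quantities $\|W^v_{(i)}\|_2$ generated by the $\ell_{2,1}$ reweighting. In that application the Lemma is precisely what certifies that the diagonal surrogate $Tr(W^{vT}\Phi^v W^v)$ of Eq.~(\ref{update_W1}) controls the change in the $\ell_{2,1}$ term between consecutive iterates, which supplies the ingredient needed to conclude the monotone non-increase of $\Gamma$ asserted in Theorem~\ref{theorem1}.
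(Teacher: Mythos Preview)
Your proof is correct and follows essentially the same approach as the paper: both reduce the inequality to the nonnegativity of the perfect square $(b^2-c^2)^2\ge 0$, with the paper expanding forward from the square and you working backward to it after first simplifying the left-hand side to $c^2/2$. Your added remarks on where positivity of $c$ is actually needed and on the role of the lemma in the $\ell_{2,1}$ reweighting are accurate and more explicit than the paper's treatment.
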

\begin{proof}
For two positive constants $b$ and $c$, it is natural that 
\begin{equation}
\begin{aligned}
(b^2-c^2)^2\ge 0 \to (b^4-2b^2c^2+c^4)\ge 0\\
\Leftrightarrow  2c^4-c^4\ge 2b^2c^2-b^4\\
\Leftrightarrow c^2-\frac{c^4}{2c^2}\ge b^2-\frac{b^4}{2c^2}
\end{aligned}
\end{equation}
\end{proof}
1) Update $\{\mathbf{P_{t+1}^v}\}_{v=1}^V$ while fixing $\{\mathbf{W_{t}^v}\}_{v=1}^V$, $A_t$ and $Z_t$. The subproblem is transformed into
\begin{equation}
\begin{aligned}
\underset{P^v}{min}\sum_{v=1}^V\|X^v-P^vAZ\|_F^2
\label{conv2}
\end{aligned}
\end{equation}

Note that if we define $b=\|X_t^v-P_{t+1}^vA_tZ_t\|_{F}^2$, $c=\|X_t^v-P_{t}^vA_tZ_t\|_{F}^2$, their non-negativity is relatively easy to verify. According to Lemma \ref{lemma2}, we can obtain the following result:
\begin{equation}
\begin{aligned}
&\sum_{v=1}^V\|X_t^v-P_{t}^vA_tZ_t\|_{F}^2-\sum_{v=1}^V\frac{\|X_t^v-P_{t}^vA_tZ_t\|_{F}^4}{2\|X_t^v-P_{t}^vA_tZ_t\|_{F}^2} \\
&\ge \sum_{v=1}^V\|X_t^v-P_{t+1}^vA_tZ_t\|_{F}^2-\sum_{v=1}^V\frac{\|X_t^v-P_{t+1}^vA_tZ_t\|_{F}^4}{2\|X_t^v-P_{t}^vA_tZ_t\|_{F}^2}
\label{conv3}
\end{aligned}
\end{equation}

Besides, according to the inexact Majorization-Minimization method \cite{SunBP17,HuNWL20}, we can obtain 
\begin{equation}
\begin{aligned}
 \sum_{v=1}^V\frac{\|X_t^v-P_{t}^vA_tZ_t\|_{F}^4}{2\|X_t^v-P_{t}^vA_tZ_t\|_{F}^2} \ge 
 \sum_{v=1}^V\frac{\|X_t^v-P_{t+1}^vA_tZ_t\|_{F}^4}{2\|X_t^v-P_{t}^vA_tZ_t\|_{F}^2}
\label{conv4}
\end{aligned}
\end{equation}
Combing the Eq.(\ref{conv3}) and Eq.(\ref{conv4}), we have 
\begin{equation}
\begin{aligned}
\sum_{v=1}^V\|X_t^v-P_{t}^vA_tZ_t\|_{F}^2 \ge 
\sum_{v=1}^V\|X_t^v-P_{t+1}^vA_tZ_t\|_{F}^2
\label{conv5}
\end{aligned}
\end{equation}

Eq.(\ref{conv5}) illustrates that $\{\mathbf{P^v}\}_{v=1}^V$ is monotonically non-increasing when the other variables are fixed, therefore we can derive 
\begin{equation}
\begin{aligned}
&\Gamma(\{\mathbf{P^v_t}\}_{v=1}^V, \{\mathbf{W^v_t}\}_{v=1}^V,A_t,Z_t) \ge \\
&\Gamma(\{\mathbf{P^v_{t+1}}\}_{v=1}^V, \{\mathbf{W^v_t}\}_{v=1}^V,A_t,Z_t)
\label{conv6}
\end{aligned}
\end{equation}

2) Update the $\{\mathbf{W_{t+1}^v}\}_{v=1}^V$ while $\{\mathbf{P_t}^v\}_{v=1}^V$, $A_t$ and $Z_t$ are fixed, the subproblem of $\{\mathbf{W^v}\}_{v=1}^V$ is as follows:
\begin{equation}
\begin{aligned}
\min_{W^v}\lambda_1\sum_{v=1}^V\|W^vX^v-AZ\|_F^2 +\lambda_3\|W^v\|_{2,1}
\label{conv7}
\end{aligned}
\end{equation}

Thanks to the literature \cite{NieHCD10,YangYCMFYYL20}, we can obtain the following Theorem \ref{theorem2}:
\begin{theorem}
Considering a general $\ell_{2,1}$-norm minimization problem as:
\begin{equation}
\begin{aligned}
\min_{W}f(W)+\|W\|_{2,1} \ s.t. U\in \mathcal{C}
\end{aligned}
\label{2,1 norm}
\end{equation}
\label{theorem2}
If $f(W)$ is a convex function and $\mathcal{C}$ is a convex set, then the iterative method will converge to the global minimum.
\end{theorem}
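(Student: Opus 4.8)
To establish this, the plan is to analyze the iteratively re-weighted scheme that already underlies Eq.~(\ref{update_W3}): writing $\|W\|_{2,1}=\mathrm{Tr}(W^T\Phi W)$ as in Eq.~(\ref{update_W1}), where $\Phi$ is diagonal with $i$-th entry $\tfrac{1}{2\|W^i\|_2}$ ($W^i$ the $i$-th row of $W$), the iteration is: given $W_t\in\mathcal{C}$, form $\Phi_t$ from $W_t$ and set $W_{t+1}\in\arg\min_{W\in\mathcal{C}} f(W)+\mathrm{Tr}(W^T\Phi_t W)$. First I would note that this surrogate is a convex program: $f$ is convex by hypothesis, $\mathrm{Tr}(W^T\Phi_t W)=\sum_i\Phi_{t,ii}\|W^i\|_2^2$ is a nonnegatively weighted sum of convex quadratics, and $\mathcal{C}$ is convex, so $W_{t+1}$ can be taken to be a genuine global minimizer of the surrogate.

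The core of the argument is a descent inequality. Feasibility of $W_t$ for the $(t{+}1)$-st surrogate together with optimality of $W_{t+1}$ gives $f(W_{t+1})+\sum_i\tfrac{\|W_{t+1}^i\|_2^2}{2\|W_t^i\|_2}\le f(W_t)+\sum_i\tfrac{\|W_t^i\|_2^2}{2\|W_t^i\|_2}$. I would then invoke Lemma~\ref{lemma2} row by row (with its two positive constants taken as $\sqrt{\|W_{t+1}^i\|_2}$ and $\sqrt{\|W_t^i\|_2}$; equivalently, concavity of $s\mapsto\sqrt{s}$) to obtain $\|W_{t+1}^i\|_2-\tfrac{\|W_{t+1}^i\|_2^2}{2\|W_t^i\|_2}\le\|W_t^i\|_2-\tfrac{\|W_t^i\|_2^2}{2\|W_t^i\|_2}$ for each $i$. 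Summing over $i$ and adding the previous inequality, the re-weighting terms telescope and leave $f(W_{t+1})+\|W_{t+1}\|_{2,1}\le f(W_t)+\|W_t\|_{2,1}$. Hence the true objective $g(W):=f(W)+\|W\|_{2,1}$ is non-increasing along the iterates, and since $g\ge\inf_{\mathcal{C}}f>-\infty$ it is bounded below, so the sequence $g(W_t)$ converges.

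To upgrade convergence of the objective to convergence to the \emph{global} minimum, I would examine a fixed point $W^\star$ of the iteration: $W^\star$ minimizes $f(W)+\mathrm{Tr}(W^T\Phi^\star W)$ over $\mathcal{C}$ with $\Phi^\star$ built from $W^\star$. Writing the first-order optimality condition for this convex surrogate, the gradient of the quadratic term is $2\Phi^\star W^\star$, whose $i$-th row is $W^{\star i}/\|W^{\star i}\|_2$, which is precisely an element of $\partial\|\cdot\|_{2,1}$ at $W^\star$. Thus $W^\star$ satisfies the KKT / first-order optimality conditions of the original problem $\min_{W\in\mathcal{C}}f(W)+\|W\|_{2,1}$; since $f$, $\|\cdot\|_{2,1}$ and $\mathcal{C}$ are all convex, those conditions are sufficient for global optimality, so $W^\star$ is a global minimizer. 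One then observes (using coercivity of $g$, or compactness of $\mathcal{C}$) that the bounded, objective-decreasing sequence $\{W_t\}$ has accumulation points and that each accumulation point is such a fixed point, hence a global minimizer.

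The main obstacle is the non-smooth behavior at rows with $\|W_t^i\|_2=0$, where $\Phi_t$ is undefined and $\partial\|\cdot\|_{2,1}$ is set-valued, so both the row-wise Lemma~\ref{lemma2} step and the KKT-matching step break. The remedy I would adopt is the standard $\varepsilon$-regularization: replace each $\|W^i\|_2$ by $\sqrt{\|W^i\|_2^2+\varepsilon}$ throughout, run the identical argument to get convergence to the minimizer of $f(W)+\sum_i\sqrt{\|W^i\|_2^2+\varepsilon}$ over $\mathcal{C}$, and then let $\varepsilon\to 0$, using that these regularized objectives decrease to $g$ and epi-converge to it, to recover a minimizer of the original problem. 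A second, milder point is that monotone decrease of $g(W_t)$ alone does not force the iterates themselves to converge without some compactness or strong-convexity hypothesis; absent that, the cleanly provable and fully sufficient conclusion is that every accumulation point of $\{W_t\}$ is a global minimizer of $g$.
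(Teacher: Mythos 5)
Your proposal is correct, and it is essentially the argument the paper is relying on: the paper does not actually prove Theorem~\ref{theorem2} itself but delegates it to the citation of Nie et al., remarking only that $f$ is convex and that "the monotonically decreasing proof of the $\ell_{2,1}$ norm can be obtained from" that reference. What you have written is precisely that reference's iteratively reweighted argument, and you correctly identify that the paper's own Lemma~\ref{lemma2} (with $c^2=\|W_t^i\|_2$, $b^2=\|W_{t+1}^i\|_2$) is the device that converts surrogate descent into descent of the true objective $f(W)+\|W\|_{2,1}$. Where you go beyond the paper is in the two places where its sketch is genuinely incomplete: (i) monotone descent alone does not yield the stated conclusion of convergence to the \emph{global} minimum, and your fixed-point/KKT step --- observing that $2\Phi^\star W^\star$ has rows $W^{\star i}/\|W^{\star i}\|_2$, i.e.\ an element of $\partial\|\cdot\|_{2,1}$ at $W^\star$, so that surrogate stationarity implies stationarity (hence global optimality, by convexity) for the original problem --- is exactly the missing piece; and (ii) the degeneracy at rows with $\|W^i\|_2=0$, where $\Phi$ is undefined, which you correctly repair by $\varepsilon$-regularization. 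Your closing caveat is also apt: without coercivity or compactness of $\mathcal{C}$ one only gets that every accumulation point is a global minimizer, which is weaker than, and more honest than, the theorem's literal claim.
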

Obviously, $f(W)$ is a convex function due to the properties of the Frobenius norm. And the monotonically decreasing proof of the $\mathcal{L}_{2,1}$ norm can be obtained from \cite{NieHCD10}. Thus 
\begin{equation}
\begin{aligned}
&\Gamma(\{\mathbf{P^v_t}\}_{v=1}^V, \{\mathbf{W^v_t}\}_{v=1}^V,A_t,Z_t) \ge \\
&\Gamma(\{\mathbf{P^v_{t}}\}_{v=1}^V, \{\mathbf{W^v_{t+1}}\}_{v=1}^V,A_t,Z_t)
\label{conv8}
\end{aligned}
\end{equation}

3) Update the $A_t$ with fixed other variables. The solution of $A_t$ is
\begin{equation}
\begin{aligned}
\min_{A}\sum_{v=1}^V\|X^v-P^vAZ\|_F^2+\lambda_1\sum_{v=1}^V\|W^vX^v-AZ\|_F^2
\label{conv9}
\end{aligned}
\end{equation}

Since the Frobenius norm is convex \cite{boyd2004convex}, then we can obtain
\begin{equation}
\begin{aligned}
&\Gamma(\{\mathbf{P^v_t}\}_{v=1}^V, \{\mathbf{W^v_t}\}_{v=1}^V,A_t,Z_t) \ge \\
&\Gamma(\{\mathbf{P^v_{t}}\}_{v=1}^V, \{\mathbf{W^v_{t}}\}_{v=1}^V,A_{t+1},Z_{t})
\label{conv10}
\end{aligned}
\end{equation}

4) Update the $Z_{t+1}$ with fixed $\{\mathbf{P_t}^v\}_{v=1}^V$, $\{\mathbf{W_{t+1}^v}\}_{v=1}^V$ and $A_t$. This subproblem can be written as 
\begin{equation}
\begin{aligned}
\min_{Z}\sum_{v=1}^V\|X^v-P^vAZ\|_F^2+\lambda_1\sum_{v=1}^V\|W^vX^v-AZ\|_F^2
\label{conv11}
\end{aligned}
\end{equation}

We can solve the subproblem Eq.(\ref{conv8}) by Lemma \ref{lemma2}, and thus obtain
\begin{equation}
\begin{aligned}
&\Gamma(\{\mathbf{P^v_t}\}_{v=1}^V, \{\mathbf{W^v_t}\}_{v=1}^V,A_t,Z_t) \ge \\
&\Gamma(\{\mathbf{P^v_{t}}\}_{v=1}^V, \{\mathbf{W^v_{t}}\}_{v=1}^V,A_t,Z_{t+1})
\label{conv12}
\end{aligned}
\end{equation}

Finally, joint with Eq.(\ref{conv6}), Eq.(\ref{conv8}), Eq.(\ref{conv10}) and Eq.(\ref{conv12}), we have 
\begin{equation}
\begin{aligned}
&\Gamma(\{\mathbf{P^v_t}\}_{v=1}^V, \{\mathbf{W^v_t}\}_{v=1}^V,A_t,Z_t) \ge \\
&\Gamma(\{\mathbf{P^v_{t+1}}\}_{v=1}^V, \{\mathbf{W^v_{t+1}}\}_{v=1}^V,A_{t+1},Z_{t+1})
\label{conv13}
\end{aligned}
\end{equation}
\end{proof}

Eq.(\ref{conv13}) indicates that the value of the objective function will be nonincreasing in each iteration. Therefore theorem \ref{theorem1} is proved. Furthermore, since the objective function defined in Eq.(\ref{conv1}) has a lower bound of zero \cite{FPMVS}, which also verifies the convergence of our algorithm.

\bibliographystyle{IEEEtran}
\bibliography{main.bib}

\vfill

\end{document}